\documentclass[twoside]{article}

\usepackage{microtype}
\usepackage{graphicx}
\usepackage{subcaption}
\usepackage{booktabs} 
\usepackage{amsfonts}
\usepackage{algorithm}
\usepackage{url}
\usepackage{algpseudocode}
\usepackage[table]{xcolor}
\usepackage{multirow}
\usepackage{csquotes}
\usepackage{bm}
\usepackage{enumitem}
\usepackage{tikz}
\usepackage{dblfloatfix}

\usepackage[round]{natbib}

\usepackage{hyperref}

\usepackage[accepted]{aistats2026}

\usepackage{amsmath}
\usepackage{amssymb}
\usepackage{mathtools}
\usepackage{amsthm}

\newcommand{\mainrel}[2]{%
  #1 {\scriptsize\textcolor{black!80}{(+#2\%)}}%
}

\newcommand{\avgrel}[2]{%
  \textcolor{black!100}{#1}%
}
\newcommand{\avgrelbold}[1]{%
  \textbf{\textcolor{black!100}{#1}}%
}

\usepackage[capitalize,noabbrev]{cleveref}

\theoremstyle{plain}

\newtheorem{proposition}{Proposition}[section]

\theoremstyle{definition}

\theoremstyle{remark}

\newcommand{\params}{\boldsymbol{\theta}}

\newcommand{\weightdecay}{\lambda}

\usepackage[textsize=tiny]{todonotes}

\begin{document}

\twocolumn[

\aistatstitle{Guiding Posterior Exploration with Optimizer-Derived Geometry}

\aistatsauthor{ Moritz Schlager$^{*}$ \And Emanuel Sommer$^{*}$ \And Thomas M\"ollenhoff \And David R\"ugamer }

\aistatsaddress{ TUM, MCML \And  LMU Munich, MCML \And RIKEN AIP \And LMU Munich, MCML }]

\begin{abstract}
  Sampling-based methods offer a principled approach to uncertainty quantification in Bayesian neural networks. Their practical use, however, is often challenged by the computational cost of exploring high-dimensional and multimodal posterior distributions. To overcome these difficulties, Bayesian Deep Ensembles, i.e., warmstarting the sampling from several optimized solutions, have proven to be an effective strategy. In this paper, we demonstrate that curvature estimates computed during the warmstart as a byproduct in adaptive optimizers such as AdamW can inform the sampling phase at negligible additional cost. 
  Specifically, our proposed preconditioned sampling strategy based on optimizer-derived geometries can substantially reduce or even eliminate the need for a lengthy sampling burn-in phase and leads to greater numerical stability. This approach consistently maintains or improves predictive performance and uncertainty quantification without any additional computational costs. We confirm the consistency of our findings across various datasets and network architectures.
\end{abstract}

\section{INTRODUCTION}

Bayesian deep learning strives to build a principled framework for uncertainty quantification, a critical requirement for deploying neural networks in high-stakes domains. Sampling-based methods such as Markov chain Monte Carlo (MCMC) offer a powerful means of approximating the posterior distribution over the model's parameters. However, their practical application is often hindered by the immense computational cost of exploring the complex, high-dimensional, and generally ill-conditioned posterior landscapes typical of deep Bayesian neural networks (BNNs) \citep{papamarkou_position_2024}.

\begin{figure}[t]
    \centering
    \includegraphics[width=1\linewidth]{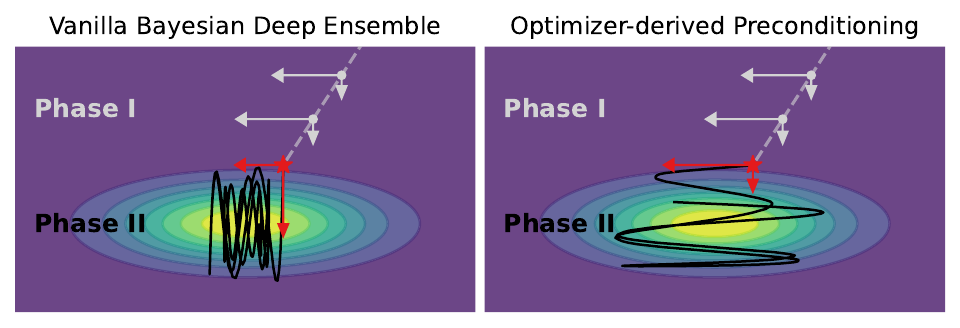}
    \caption{Simulation of optimizer-derived preconditioning in a two-phase Bayesian Deep Ensemble pipeline on an ill-conditioned Gaussian mixture. Phase I shows AdamW optimization trajectories and associated coordinate-wise second-moment statistics (grey arrows). Phase II depicts SGHMC sampling trajectories (black line) and the used preconditioning magnitudes (red arrows). Left: Vanilla sampling with an identity mass matrix explores inefficiently due to strong anisotropy. Right: Optimizer-derived preconditioning aligns the sampler with the local geometry, enabling more efficient exploration and faster mixing.}
    \label{fig:precon-example-motivation}
\end{figure}

A recent and growingly popular post-Bayesian approach for scalable inference in BNNs is the two-stage BDE procedure \citep{sommer_connecting_2024, sommer2026can, paulin_sampling_2025, duffield2025scalable}. This approach first uses standard optimization to find multiple distinct modes akin to Deep Ensembles \citep{lakshminarayanan2017simple} and then initializes independent MCMC samplers from these solutions to explore their local basins of attraction. This strategy has shown strong empirical performance, often outperforming common variational methods \citep{kobialka2026interplay}. However, the optimization and sampling stages are typically treated as disjoint processes. In such cases, the geometric information about the loss landscape gathered during optimization is discarded, forcing the subsequent sampling phase to rediscover the local curvature from scratch during a potentially long and costly scale-adaptation period.

\paragraph{Our Contributions}

In this work, we explore a tighter integration of the optimization and sampling phases. We use information of adaptive optimizers like AdamW \citep{loshchilov_decoupled_2019}, which maintains first- and second-moment estimates derived from gradients as an inexpensive proxy for local curvature. These optimizer-derived statistics can be utilized to directly initialize the preconditioner for subsequent stochastic gradient MCMC (SGMCMC) sampling \citep{girolami_riemann_2011}  (see \cref{fig:precon-example-motivation}). By providing an informed geometric starting point, this ``warmstart'' allows the sampler to bypass the potentially lengthy scale-adaptation period \citep{springenberg_bayesian_2016}, offering a more efficient path to posterior exploration in Bayesian ensembles at a negligible cost. To this end, we \vspace{-0.2cm}
\begin{itemize}[leftmargin=1.2em]
    \setlength\itemsep{0em}
    \item derive several theoretically grounded ways of using optimizer properties to precondition samplers;
    \item evaluate multiple optimizer-derived geometry transfer strategies across modalities and architectures;
    \item assess robustness and the ability to accelerate sampler convergence while measuring the impact on predictive performance and uncertainty quality;
    \item analyze the local geometry patterns found in realistic BNNs and uncover differences between static and adaptive preconditioning variants.
\end{itemize}

\section{BACKGROUND}
\label{sec:background}

In Bayesian deep learning, we aim to infer the posterior distribution $p(\params|\mathcal{D})$ over a neural network's parameters. As exact inference is intractable, we rely on approximations such as Stochastic Gradient MCMC (SGMCMC). A recent efficient paradigm is the two-stage \emph{Bayesian Deep Ensemble} (BDE) approach \citep{sommer_connecting_2024, duffield2025scalable, paulin_sampling_2025}, which initializes independent MCMC chains from high-probability modes found by optimization.

A major challenge in this domain is the efficient exploration of the complex posterior geometry, which often requires preconditioning, for instance through a mass matrix $\mathbf{M}$ in SGHMC. While effective, such preconditioning schemes typically rely on expensive adaptation phases, as in scale-adapted SGHMC \citep[AdaSGHMC;][]{springenberg_bayesian_2016}. Our work focuses on bridging the optimization and sampling stages by utilizing the curvature estimates (such as those from AdamW) generated during the first stage as a "free," high-quality preconditioner for the sampling phase. 

We provide a comprehensive background on Bayesian inference, BDEs, detailed SGMCMC dynamics, and a full review of related literature in App. \ref{app:related}.

\begin{table*}[!b]
    \centering
    \small
    \caption{Average Performance (3 replications) of layer-wise preconditioned SGHMC (no warmup) and AdaSGHMC (with warmup) across all datasets. Relative improvements over the Deep Ensemble baseline are shown in parentheses. Best values per column are boldfaced. Further details and results in Appendices \ref{app:furtherlayerwiseperf} and \ref{app:furtherparamwiseperf}.}
    \setlength{\tabcolsep}{4pt}
    \begin{subtable}{\textwidth}
        \centering
        \caption{Predictive performance is reported as root mean squared error (RMSE $\downarrow$) on regression tasks and Accuracy ($\uparrow$) on classification tasks.}
        \label{tab:results_predperf_abs_rel}
        \resizebox{\textwidth}{!}{
        \begin{tabular}{ll|
            c c
            c c c
            |c}
        \toprule
        \multicolumn{1}{c}{\multirow{2}{*}{\textbf{Sampler}}} &
        \multicolumn{1}{c}{\multirow{2}{*}{\textbf{Preconditioning}}} &
        \multicolumn{2}{c}{\textbf{RMSE} ($\downarrow$)} &
        \multicolumn{3}{c}{\textbf{Accuracy} ($\uparrow$)} &
        \multicolumn{1}{c}{\multirow{2}{*}{\textbf{Avg. Improv.}}} \\
        \cmidrule(lr){3-4}\cmidrule(lr){5-7}
          & \multicolumn{1}{c@{}}{} &
        \multicolumn{1}{c}{\textbf{bike}} &
        \multicolumn{1}{c}{\textbf{protein}} &
        \multicolumn{1}{c}{\textbf{CIFAR-10}} &
        \multicolumn{1}{c}{\textbf{F-MNIST}} &
        \multicolumn{1}{c}{\textbf{shakespeare}} & \textbf{Over DE} \\
        \midrule
        Deep Ensemble & -- &
        0.2445 & 0.6589 & 0.9019 & 0.9157 & 0.5504 & -- \\
        \midrule
        SGHMC & identity &
        \mainrel{\textbf{0.2392}}{2.20} & \mainrel{0.6533}{0.86} &
        \mainrel{0.9150}{1.46} & \mainrel{0.9226}{0.75} & \mainrel{0.5536}{0.58} &
        \avgrel{+1.17\%}{} \\
        SGHMC & gradient &
        \mainrel{0.2406}{1.59} & \mainrel{0.6492}{1.47} &
        \mainrel{\textbf{0.9155}}{1.51} & \mainrel{\textbf{0.9268}}{1.21} & \mainrel{0.5533}{0.53} &
        \avgrel{+1.26\%}{} \\
        SGHMC & 1st moment &
        \mainrel{0.2405}{1.64} & \mainrel{0.6504}{1.30} &
        \mainrel{0.9141}{1.35} & \mainrel{0.9256}{1.08} & \mainrel{\textbf{0.5543}}{0.71} &
        \avgrel{+1.22\%}{} \\
        SGHMC & 2nd moment &
        \mainrel{0.2400}{1.87} & \mainrel{\textbf{0.6470}}{1.81} &
        \mainrel{0.9154}{1.50} & \mainrel{0.9231}{0.81} & \mainrel{0.5535}{0.56} &
        \avgrelbold{+1.31\%} \\
        \midrule
        AdaSGHMC & adaptive &
        \mainrel{0.2424}{0.87} & \mainrel{0.6560}{0.44} &
        \mainrel{0.9055}{0.39} & \mainrel{0.9197}{0.43} & \mainrel{0.5523}{0.35} &
        \avgrel{+0.50\%}{} \\
        \bottomrule
        \end{tabular}
        }
    \end{subtable}
    
    \vspace{.8em}
    
    \begin{subtable}{\textwidth}
        \caption{Uncertainty quantification is reported as log pointwise predictive density (LPPD $\uparrow$) for regression and classification tasks and perplexity ($\downarrow$) for language modeling tasks.}
        \centering
        \label{tab:results_uncertainty_abs_rel}
        \resizebox{\textwidth}{!}{
        \begin{tabular}{ll|
            c c c c c
            |c}
        \toprule
        \multicolumn{1}{c}{\multirow{2}{*}{\textbf{Sampler}}} &
        \multicolumn{1}{c}{\multirow{2}{*}{\textbf{Preconditioning}}} &
        \multicolumn{4}{c}{\textbf{LPPD} ($\uparrow$)} &
        \multicolumn{1}{c}{\textbf{Perplexity} ($\downarrow$)} &
        \multicolumn{1}{c}{\multirow{2}{*}{\textbf{Avg. Improv.}}} \\
        \cmidrule(lr){3-6}\cmidrule(lr){7-7}
          & \multicolumn{1}{c@{}}{} &
        \multicolumn{1}{c}{\textbf{bike}} &
        \multicolumn{1}{c}{\textbf{protein}} &
        \multicolumn{1}{c}{\textbf{CIFAR-10}} &
        \multicolumn{1}{c}{\textbf{F-MNIST}} &
        \multicolumn{1}{c}{\textbf{shakespeare}} & \textbf{Over DE}\\
        \midrule
        Deep Ensemble & -- &
        0.5834 & -0.7173 & -0.3047 & -0.2318 & 4.3687 & -- \\
        \midrule
        SGHMC & identity &
        \mainrel{0.6190}{6.11} & \mainrel{-0.6434}{10.30} &
        \mainrel{-0.2641}{13.3} & \mainrel{-0.2216}{4.40} &
        \mainrel{\textbf{4.2645}}{2.38} &
        \avgrel{+7.30\%}{} \\
        SGHMC & gradient &
        \mainrel{\textbf{0.6418}}{10.0} & \mainrel{-0.6171}{13.97} &
        \mainrel{\textbf{-0.2583}}{15.3} & \mainrel{\textbf{-0.2102}}{9.34} &
        \mainrel{4.2846}{1.92} &
        \avgrelbold{+10.1\%} \\
        SGHMC & 1st moment &
        \mainrel{0.6416}{9.99} & \mainrel{-0.6274}{12.54} &
        \mainrel{-0.2644}{13.3} & \mainrel{-0.2118}{8.64} &
        \mainrel{4.2801}{2.03} &
        \avgrel{+9.29\%}{} \\
        SGHMC & 2nd moment &
        \mainrel{0.6397}{9.67} & \mainrel{\textbf{-0.6002}}{16.32} &
        \mainrel{-0.2615}{14.2} & \mainrel{-0.2214}{4.48} &
        \mainrel{4.2894}{1.81} &
        \avgrel{+9.29\%}{} \\
        \midrule
        AdaSGHMC & adaptive &
        \mainrel{0.6182}{5.98} & \mainrel{-0.6908}{3.69} &
        \mainrel{-0.2942}{3.45} & \mainrel{-0.2197}{5.25} &
        \mainrel{4.3382}{0.70} &
        \avgrel{+3.81\%}{} \\
        \bottomrule
        \end{tabular}
        }
    \end{subtable}
    \label{tab:results_combined}
\end{table*}

\section{BRIDGING THE GAP BETWEEN OPTIMIZATION AND SAMPLING}
\label{sec:methods}

Optimization and sampling both rely on stochastic gradients and auxiliary geometry statistics. While structurally similar, the transfer of terminal optimizer statistics to precondition samplers remains unexplored. We formalize and evaluate strategies to transfer these estimates---ranging from local gradients to smoothed curvature proxies---for efficient posterior exploration.

\subsection{Parallels Between Optimizers and Samplers}

Let $\params_t \in \mathbb{R}^d$ denote the parameters at iteration $t$, and $\mathbf g_t := \nabla_\theta \hat{\ell}(\params_t)$ be the stochastic gradient of the negative log-likelihood loss $\hat{\ell}$ evaluated on a mini-batch at time $t$. Adaptive optimizers such as AdamW maintain exponential moving averages of these gradients:
\begin{align}
\boldsymbol\mu_{t} &= \beta_1 \boldsymbol\mu_{t-1} + (1-\beta_1) \mathbf g_{t-1}, \\
\boldsymbol\nu_{t} &= \beta_2 \boldsymbol\nu_{t-1} + (1-\beta_2) \mathbf g_{t-1}^2,
\end{align}
where $\boldsymbol\mu_t$ acts as momentum and $\boldsymbol\nu_t$ as a curvature proxy (gradient variance), with decay rates $\beta_{1,2} \in (0,1)$.
Similarly, SGMCMC uses a preconditioner $\mathbf M$ to rescale updates (see App.~\ref{app:related-sgmcmc}). Notably, AdaSGHMC adapts $\mathbf M$ using statistics structurally identical to $\boldsymbol\nu_t$, utilizing the posterior gradient $\nabla \tilde{U}(\params)$ rather than $\mathbf g_t$.

\paragraph{Local Spectral Alignment}
In the following, we elaborate on the theoretical bridge between Phase I and II in BDEs, allowing us to reuse information obtained during optimization for the subsequent sampling. In general, a theoretical link between the optimization loss and the posterior landscape follows from the correspondence between regularization and prior choice. For $L_2$-regularization with strength $\lambda > 0$, a mathematical equivalence between the regularized negative log-likelihood and the unnormalized log-posterior is obtained by using a Gaussian prior $\params_t \sim \mathcal{N}(\mathbf{0},\rho^2 \mathbf{I})$ with $\lambda = (2\rho^2)^{-1}$, yielding $\nabla \tilde U(\params_t) = \mathbf g_t + \lambda \params_t$. Beyond this static alignment, we justify the transfer of geometric information through the following proposition, whose proof is deferred to App.~\ref{app:proof-3.1}.

\begin{proposition}
\label{prop:isotropic-sampling}
Assume the posterior potential $U(\params)$ is locally quadratic in a basin $\mathcal{B}(\params^*)$ around a mode $\params^*$, such that $U(\params) \approx \frac{1}{2}(\params - \params^*)^\top \mathbf{H}^* (\params - \params^*)$ where $\mathbf{H}^* = \nabla^2 U(\params^*)$ is the Hessian at the mode. If the mass matrix is set to $\mathbf{M} = \mathbf{H}^*$, the preconditioned SGHMC dynamics achieve optimal local mixing by inducing an isotropic Gaussian approximation of the target distribution with a unit condition number.
\end{proposition}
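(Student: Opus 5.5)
The plan is to make the informal statement precise through a linear change of variables. I would write out the (noise-free, continuous-time) preconditioned SGHMC dynamics with mass matrix $\mathbf M$, friction $\mathbf C$ and position $\params$, momentum $\mathbf r$:
\begin{align}
d\params = \mathbf M^{-1}\mathbf r\,dt,\qquad d\mathbf r = -\nabla U(\params)\,dt - \mathbf C\mathbf M^{-1}\mathbf r\,dt + \sqrt{2\mathbf C}\,d\mathbf W_t .
\end{align}
I would take the stationary law as given from App.~\ref{app:related-sgmcmc}: it is $\propto \exp(-U(\params) - \tfrac12 \mathbf r^\top\mathbf M^{-1}\mathbf r)$. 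Inside the basin $\mathcal B(\params^*)$ I then substitute the quadratic approximation, so that $\nabla U(\params) = \mathbf H^*(\params-\params^*)$. Since $\params^*$ is a mode, I assume $\mathbf H^*$ is symmetric positive definite, so that $(\mathbf H^*)^{1/2}$ exists and is invertible.

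The key step is the transformation $\mathbf z = (\mathbf H^*)^{1/2}(\params-\params^*)$ and $\mathbf p = (\mathbf H^*)^{-1/2}\mathbf r$ with $\mathbf M = \mathbf H^*$. I will check the drift directly. First, $d\mathbf z = (\mathbf H^*)^{1/2}(\mathbf H^*)^{-1}\mathbf r\,dt = \mathbf p\,dt$. Second, $d\mathbf p = -(\mathbf H^*)^{-1/2}\mathbf H^*(\params-\params^*)\,dt + \dots = -\mathbf z\,dt - \tilde{\mathbf C}\mathbf p\,dt + \sqrt{2\tilde{\mathbf C}}\,d\tilde{\mathbf W}_t$, where $\tilde{\mathbf C} = (\mathbf H^*)^{-1/2}\mathbf C(\mathbf H^*)^{-1/2}$. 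The noise term transforms consistently because its covariance is $2(\mathbf H^*)^{-1/2}\mathbf C(\mathbf H^*)^{-1/2}$.

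In the new coordinates the target becomes $\mathcal N(\mathbf 0,\mathbf I)$ in $\mathbf z$ and the kinetic energy becomes $\tfrac12\|\mathbf p\|^2$. The dynamics are therefore standard unit-mass Langevin/HMC on an isotropic Gaussian, whose Hessian $\mathbf I$ has condition number $\kappa = \lambda_{\max}/\lambda_{\min} = 1$. Equivalently, the original position law is $\mathcal N(\params^*, (\mathbf H^*)^{-1})$. I would also verify that a step of size $\epsilon$ in the original parametrization equals a step of size $\epsilon$ in $\mathbf z$, so that discretized SGHMC inherits the same isotropy and no direction is simultaneously over- and under-resolved.

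To justify ``optimal local mixing'' I would compare with the identity mass matrix. There the linearized Hamiltonian dynamics oscillate with frequencies $\sqrt{\lambda_i(\mathbf H^*)}$. Stability of the integrator requires $\epsilon \lesssim 2/\sqrt{\lambda_{\max}}$, while decorrelating the slowest direction takes $\sim 1/\sqrt{\lambda_{\min}}$ time. The number of steps therefore scales as $\sqrt{\kappa(\mathbf H^*)}$. With $\mathbf M=\mathbf H^*$ all frequencies equal $1$, the ratio is $1$, and this is the minimal achievable value over mass matrices, since $\kappa\ge1$ always.

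The main obstacle is that ``optimal mixing'' is not a sharply defined notion, and the stochastic-gradient noise and the friction may spoil the clean picture. I would handle this by stating the claim for the conditioning of the linearized dynamics, using condition number and step-size/relaxation-time ratio as the mixing proxy. I would also choose $\mathbf C$ proportional to $\mathbf M$ (e.g.\ $\mathbf C = c\,\mathbf H^*$) so that $\tilde{\mathbf C}=c\mathbf I$ stays isotropic, and note that gradient noise with covariance $\mathbf B$ transforms to $(\mathbf H^*)^{-1/2}\mathbf B(\mathbf H^*)^{-1/2}$, which is neglected to leading order as in the standard SGHMC analysis.
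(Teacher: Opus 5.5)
Your proposal is correct and takes essentially the same approach as the paper. Both whiten the local quadratic potential with the linear map $\tilde{\params} = (\mathbf{H}^*)^{1/2}(\params-\params^*)$, so that $\mathbf{M}^{-1/2}\mathbf{H}^*\mathbf{M}^{-1/2} = \mathbf{I}$ and $\kappa = 1$; the paper simply discards friction and gradient noise and asserts that this minimizes mixing time, whereas you additionally transform the momentum, friction (choosing $\mathbf{C}\propto\mathbf{H}^*$ to keep it isotropic), noise and discretization, and support the mixing claim with a $\sqrt{\kappa}$ step-count heuristic.
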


Notably, by employing a static mass matrix $\mathbf{M}$ derived at the mode, the dynamics satisfy the constant fluctuation-dissipation theorem. This allows the system to exploit the local geometry while preserving the correct stationary distribution, without requiring the computationally expensive time-dependent friction terms or Christoffel-symbol corrections typically necessitated by Riemannian Manifold MCMC \citep{girolami_riemann_2011}.
Further, in high-dimensional neural networks, the full Hessian $\mathbf{H} = \nabla^2 U(\params)$ is computationally intractable and sampling is usually done using a diagonal mass matrix $\mathbf{M}$. 
In this setting, the optimizer's terminal state provides various computationally efficient diagonal proxies for $\mathbf{H}^*$ such as $\boldsymbol{\nu}_T$. For models where the stochastic gradient noise is approximately Gaussian, the second-moment accumulator $\boldsymbol{\nu}_T$ serves as a diagonal estimator of the Fisher Information Matrix (FIM), which coincides with the expected Hessian \citep{khan2018fast}. 
In practice, AdamW-style optimizers utilize root-scaling for numerical stability in non-convex landscapes where curvature may be singular or noisy. Thus, initializing $\mathbf{M} = \mathrm{diag}(\sqrt{\boldsymbol{\nu}_T} + \epsilon \mathbf{I})$ transfers the specific ``effective'' metric tensor that successfully navigated the optimization landscape to the sampling phase. 
This construction serves as a regularized spectral surrogate, preserving local geometry while ensuring the dynamics remain well-posed.

\subsection{Optimizer-Informed Preconditioning Strategies}

We consider diagonal mass matrices $\mathbf{M} = \mathrm{diag}(m_1,\dots,m_d)$ that reuse optimizer statistics accumulated during training. Beyond the isotropic baseline $\mathbf{M}=\mathbf{I}_d$, we study the following constructions:
\begin{align*}
    \text{Gradient-based:} \quad & \mathbf{M} = \mathrm{diag}(\mathbf g_T^2 + \epsilon\bm{1}), \\
    \text{First-moment:} \quad & \mathbf{M} = \mathrm{diag}(\boldsymbol\mu_T^2 + \epsilon\bm{1}), \\
    \text{Second-moment:} \quad & \mathbf{M} = \mathrm{diag}(\sqrt{\boldsymbol\nu_T} + \epsilon\bm{1}). 
\end{align*}
Here $ \mathbf g_T, \boldsymbol\mu_T, \boldsymbol\nu_T$ denote the final gradient, first-moment and second-moment statistics, respectively. 
This unified formulation highlights that optimizer states readily provide curvature proxies for initializing $\mathbf{M}$.

These strategies differ fundamentally in how they approximate posterior geometry. A preconditioner based on $\mathbf g_T$ is purely local, reflecting the stochastic gradient at the final iteration. While this can locally yield highly adaptive scaling, it is also susceptible to noise and batch-to-batch variability, and thus provides only a myopic proxy for steepness. Using $\boldsymbol\mu_T$ introduces temporal smoothing: since $\beta_1<\beta_2$ in practice, the first-moment average emphasizes more recent gradients, yielding a preconditioner that captures local structure but with improved stability. In contrast, $\boldsymbol\nu_T$ accumulates squared gradients with usually slower forgetting, effectively approximating parameter-wise Fisher information. This makes the preconditioner less responsive to local fluctuations but more robust and theoretically aligned with variance-based geometry. Following the RMSprop-inspired approach from \citet{li_preconditioned_2016}, we scale $\boldsymbol\nu_T$ by taking the element-wise square root. 

\subsection{Layer-wise Preconditioning}
To improve numerical stability, we also evaluate a \emph{layer-wise} strategy \citep{wenzel_how_2020}. Instead of maintaining a value for every parameter, we compute a single scalar for each layer $\ell$ by averaging the optimizer statistics of its parameters and normalizing across layers; the precise formulation is given in App.~\ref{app:methods_details}. This reduces granularity but acts as a regularizer against extreme single-parameter values and upper bounds the maximum parameter-wise step size, increasing stability.

\section{EMPIRICAL STUDY}

We evaluate optimizer-derived preconditioning on tabular regression (bikesharing, protein), image classification (F-MNIST, CIFAR-10), and language modeling (shakespeare) using FCNs, CNNs, ResNets, and Transformers. We compare our preconditioned SGHMC (no warmup) against Deep Ensembles and AdaSGHMC (with warmup) (see App.~\ref{app:experimental-setup} for the full setup).

\subsection{Performance and Efficiency}
\label{sec:performance}
As shown in \cref{tab:results_combined}, optimizer-derived preconditioning consistently outperforms the Deep Ensemble baseline across all tasks and metrics. Notably, it also surpasses AdaSGHMC, particularly in uncertainty quantification (LPPD, Perplexity), despite using zero sampling warmup. When compared to Vanilla SGHMC, an optimistic baseline that, due to higher hyperparameter sensitivity, benefits most from reporting peak performance over the same uniform step-size sweep applied to all methods, optimizer-derived preconditioning performs on par or better across all datasets. While identity preconditioning already improves over DEs, incorporating optimizer-derived geometry yields additional and consistent gains, most notably for UQ on tabular and image data.
We also analyzed the impact of warmup duration (see Figure~\ref{fig:warmup-effects-fmnist}; App.~\ref{app:warmup_analysis}). Local strategies (Gradient, 1st-Moment) achieve peak performance immediately (with zero warmup), making them ideal for budget-constrained settings. The 2nd-Moment strategy benefits from a longer warmup, eventually yielding the highest overall performance by leveraging global curvature information. This allows for a significant reduction in computational budget while maintaining or exceeding baseline performance. A dedicated discussion on runtime savings is provided in the App.~\ref{app:runtime}.

\subsection{Robustness and Stability}

\begin{figure}[t]
    \centering
    \includegraphics[width=0.9\linewidth]{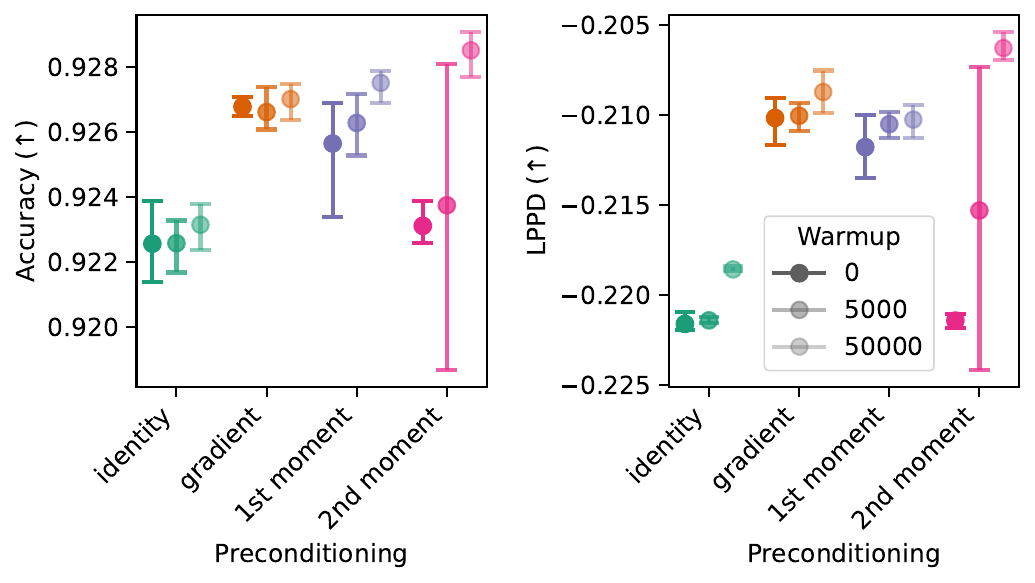}
    \caption{Minimal, maximal, and average performance on the F-MNIST dataset for different warmup lengths after warmstarting with AdamW and sampling with (preconditioned) SGHMC.}
    \label{fig:warmup-effects-fmnist}
    \vspace{-1em} 
\end{figure}

Optimizer-informed preconditioning significantly enhances robustness to the step size $\eta$. While vanilla SGHMC degrades sharply with suboptimal $\eta$, our preconditioned variants maintain stable performance across a broad range. The layer-wise normalization (\cref{sec:methods}) acts as a safeguard, rescaling effective updates to prevent discretization error (see App.~\ref{app:rob}). We further compared parameter-wise vs. layer-wise granularity and found the layer-wise approach to be substantially more numerically stable without compromising predictive performance (see App.~\ref{app:furtherparamwiseperf}).

\subsection{Analysis of Adapted Geometry}
We investigated why the static transfer outperforms the adaptive AdaSGHMC. By analyzing the cosine similarity of mass matrices (see App.~\ref{app:adaptive_analysis}), we found that AdaSGHMC's internal adaptation tends to ``overwrite" the initial geometry, making the chains in the ensemble highly self-similar. In contrast, our static optimizer-derived preconditioning preserves the distinct geometric information of each mode found in Phase I. This preservation of ensemble diversity likely accounts for the superior generalization performance. An analysis of empirical preconditioning distributions that additionally supports the layer-wise approach is provided in App.~\ref{app:empprecond}.

\section{DISCUSSION}

Our study shows that transferring optimizer-derived geometry to an SGMCMC sampler is an effective, zero-cost strategy for improving BDEs. This approach consistently retains and often enhances predictive performance and robustness, while reducing computational cost by eliminating the warmup phase. 
Our results suggest using gradient or first-moment preconditioning under constrained budgets, while second-moment estimates yield superior results given longer runs.
Our ablations confirm that layer-wise preconditioning is more numerically stable than parameter-wise counterparts. We also identified that adaptive samplers overwrite the transferred geometry, reducing ensemble diversity and providing evidence for their inferior performance.

\bibliography{refs}
\bibliographystyle{apalike}

\newpage
\appendix
\onecolumn
\section{EXTENDED BACKGROUND \& RELATED WORK}
\label{app:related}

\subsection{Bayesian Inference for Neural Networks}
In Bayesian deep learning, we aim to infer the posterior distribution over a neural network's parameters $\params \in \mathbb{R}^d$ given a dataset $\mathcal{D} = \{(y_i,\mathbf x_i)\}_{i=1}^N$. According to Bayes' rule, the posterior is $p(\params|\mathcal{D}) \propto p(\mathcal{D}|\params)p(\params)$, where $p(\mathcal{D}|\params)$ is the likelihood and $p(\params)$ is the prior. Throughout, we assume isotropic Gaussian priors on $\params$ \citep{kobialka2026interplay}. As this posterior is intractable for modern network architectures, we rely on approximation methods. Sampling-based approaches, such as SGMCMC, are a powerful class of methods that can asymptotically recover the true posterior without the restrictive assumptions of variational inference \citep{izmailov_what_2021}.

A major challenge for SGMCMC is efficiently exploring the high-dimensional, multi-modal posteriors of modern neural networks. A recently proposed remedy is a two-stage approach \citep{sommer_connecting_2024, sommer2026can, duffield2025scalable, paulin_sampling_2025}, also referred to as \emph{Bayesian Deep Ensemble}. It first uses conventional optimization to find multiple distinct, high-probability regions (modes) in the parameter space. Subsequently, independent MCMC chains are initialized from these modes to explore the local posterior geometry. Prior work has investigated optimal BDE chain initialization under constrained computational budgets \citep{rundel2025efficiently}. Our work focuses on creating a more synergistic link between these optimization and sampling stages, which, so far, have mostly been treated as independent components in the literature.

\subsection{Related Methods} 
In recent years, various approaches have been proposed to connect optimization and sampling and use synergies between both, including examples such as tempering schedules \citep{duffield2025scalable} or repeated cycles that alternate between optimization-based exploration and sampled exploitation \citep{zhang_cyclical_2020}. In this paper, we focus on the discrete two-stage approach implemented by BDEs due to its modularity and wide applicability, including, for example, the use of pre-trained models or publicly available optimization recipes. A different yet related research question in the field of sampling-based inference for BNNs is whether geometric adaptation of sampler dynamics can boost their efficiency. Despite foundational work motivating the use of preconditioning to navigate complex posteriors \citep{girolami_riemann_2011, ma_complete_2015, wenzel_how_2020}, its adoption in practice has been notably sparse. As such, prominent recent work often forgoes preconditioning entirely \citep[see e.g.][]{sommer_microcanonical_2025}, and popular libraries like \texttt{posteriors} \citep{duffield2025scalable} currently lack support for preconditioned sampling. This reluctance may stem from the practical challenges of determining and adapting the preconditioner, which can introduce its own set of hyperparameters or might require an extended warmup phase for scale-adaption \citep{springenberg_bayesian_2016}. 

\subsection{Preconditioned Stochastic Gradient MCMC} \label{app:related-sgmcmc}
Stochastic gradient Hamiltonian Monte Carlo (SGHMC) is often considered the state-of-the-art SGMCMC algorithm. It introduces an auxiliary momentum variable $\mathbf{r}$ to encourage more efficient exploration of the target distribution \citep{chen_stochastic_2014}. Its dynamics are governed by a mass matrix $\mathbf{M}$, which can be viewed as a \emph{preconditioner} that adapts the sampler's steps to the local geometry of the posterior. The discretized update equations are
\begin{align}
    \!\params_{t+1} &\!=\! \params_t + \eta \mathbf{M}^{-1} \mathbf{r}_t, \\ \label{eq:positionupdate}
    \!\mathbf{r}_{t+1} &\!=\! \mathbf{r}_t \!-\! \eta\nabla \tilde{U}(\params_t) \!-\! \eta \mathbf C \mathbf{M}^{-1} \mathbf{r}_t \!+\! \sqrt{2\eta(\mathbf{C} \!-\! \hat{\mathbf E})} \mathbf Z_t,
\end{align}
where $\eta$ is the step size, $\mathbf C$ is a friction matrix, $\nabla \tilde{U}(\params) = -\nabla \log p(\params) -\frac{N}{|\mathcal{B}|}\sum_{(\mathbf x,y) \in \mathcal{B}} \nabla \log p(y|\mathbf{x},\params)$ is the stochastic gradient of the unnormalized log-posterior based on mini-batch $\mathcal{B}\subset \mathcal{D}$,
$\hat{\mathbf E}$ is an estimate of the gradient noise covariance, and $\mathbf Z_t \sim \mathcal{N}(\bm 0, \mathbf I)$.

The choice of $\mathbf{M}$ can have a considerable effect on the sampling effectiveness. A sampler using an isotropic mass matrix ($\mathbf{M} \propto \mathbf{I}$) potentially struggles in ill-conditioned landscapes where curvature varies drastically across dimensions \citep{girolami_riemann_2011}, leading to inefficient exploration (see Figure~\ref{fig:precon-example-motivation}, left). A well-chosen preconditioner reshapes the geometry, enabling larger, more effective steps (see Figure~\ref{fig:precon-example-motivation}, right). To automate this, methods like scale-adapted SGHMC \citep[AdaSGHMC;][]{springenberg_bayesian_2016} dynamically adapt a diagonal mass matrix $\mathbf{M} = \text{diag}(\hat{\mathbf v}^{1/2})$ during a warmup phase, where $\hat{\mathbf v}$ is a running estimate of the squared gradient of $\tilde{U}(\params)$. However, this adaptation period can be lengthy and adds computational overhead.

\subsection{Curvature Estimation in Adaptive Optimizers}
Modern adaptive optimizers are designed to handle the ill-conditioned loss landscapes of deep networks. They do so by maintaining running estimates of local curvature, which are readily available as a free byproduct upon completion of the optimization phase.

One of the most frequently used optimizers nowadays, \emph{AdamW} \citep{loshchilov_decoupled_2019}, maintains both a first moment vector $\boldsymbol{\mu}_t$ and a second moment vector, $\boldsymbol{\nu}_t$, which are both exponential moving average functions of the gradients. In the optimizer, the latter serves as a diagonal preconditioner for the optimization step, adaptively rescaling the learning rate for each parameter. Various other optimizers also either maintain an estimate of second-order information or directly store this information in every iteration \citep{becker:improving,yao2021adahessian, liu2024sophia}. Similar approaches exist in variational learning. One example is the optimizer IVON \citep{shen_variational_2024}, which follows a similar idea by maintaining an online estimate of the diagonal of the (expected) Hessian to approximate the posterior variance for variational inference. Our proposed optimizer-informed preconditioning is agnostic to the specific training algorithm and applies to any method that maintains curvature-related statistics.

\subsection{Theoretical Connections: Fisher Information}

The idea of using geometry to guide sampling is theoretically well-established, with foundational work on Riemannian Manifold MCMC demonstrating significant efficiency gains by defining a metric tensor based on the Fisher Information Matrix \citep[FIM;][]{girolami_riemann_2011}. Thereby, the FIM is mostly considered the theoretically optimal choice of preconditioner as it fully captures the local geometry of the likelihood \citep{ma_complete_2015}. However, computing the full FIM is intractable for modern architectures.

A critical insight is that the diagonal of the FIM can be cheaply approximated. Recent work has established a strong connection between the statistics kept by adaptive optimizers and the diagonal FIM, a concept sometimes termed ``Fisher for free'' \citep{li_fishers_2025}. Specifically, the squared gradient accumulator from optimizers like Adam serves as a computationally trivial approximation to the empirical FIM diagonal. While this connection is known, it has not been systematically studied as a mechanism to bridge the optimization and sampling stages in the BDE context. Most existing preconditioning methods for SGMCMC either rely on costly online adaptation during warmup \citep{springenberg_bayesian_2016} or require periodic re-estimation \citep{wenzel_how_2020}. Our work investigates a simpler, more direct approach: leveraging the final state of the optimizer's curvature estimate as a static, high-quality preconditioner for the entire subsequent sampling run, thereby aiming to reduce or eliminate the adaptive warmup phase entirely.

\subsection{Applicability to Non-Gradient Samplers}
While our primary evaluation focuses on the kinetic dynamics of SGHMC, the principle of optimizer-derived geometric transfer extends naturally to non-gradient-based samplers, such as Gibbs sampling. In high-dimensional Bayesian neural networks, blocked Gibbs samplers often suffer from vanishing acceptance rates due to isotropic proposal distributions that fail to account for parameter correlations \citep{papamarkou_2023_ApproximateBlocked}. By repurposing the second-moment proxies like $\boldsymbol{\nu}_T$ as the scales of an anisotropic proposal density, one could transform the standard spherical proposals into ellipsoids aligned with the local posterior curvature. This geometric alignment would allow the sampler to take larger, more effective steps without the associated penalty in rejection rates, essentially solving the ``vanishing acceptance" problem through an inexpensive geometric warmstart rather than complex block-size reduction strategies.

\section{PROOFS AND METHOD DETAILS} \label{app:proofs}

\subsection{Proof of Proposition 3.1} \label{app:proof-3.1}

\begin{proof}
The SGHMC dynamics are governed by the Hamiltonian $H(\params, \mathbf{r}) = U(\params) + \frac{1}{2}\mathbf{r}^\top \mathbf{M}^{-1} \mathbf{r}$. To analyze the efficiency of the sampling trajectory, we consider the geometry of the potential within the local basin $\mathcal{B}(\theta^*)$ \citep{paulin_sampling_2025}. We ignore stochastic gradient noise and friction terms, which do not affect the local spectral argument given in the following. Under the local Gaussian approximation, the potential energy is $U(\params) \approx \frac{1}{2} \|\params - \params^*\|^2_{\mathbf{H}^*}$. Using the linear coordinate transformation $\tilde{\params} = \mathbf{M}^{1/2}(\params - \params^*)$, the potential in the transformed space becomes:
\begin{align*}
        \tilde{U}(\tilde{\params}) &= \frac{1}{2} (\mathbf{M}^{-1/2} \tilde{\params})^\top \mathbf{H}^* (\mathbf{M}^{-1/2} \tilde{\params})\\ &= \frac{1}{2} \tilde{\params}^\top (\mathbf{M}^{-1/2} \mathbf{H}^* \mathbf{M}^{-1/2}) \tilde{\params}.
\end{align*}
Substituting the initialization $\mathbf{M} = \mathbf{H}^*$ into the transformed Hessian $\tilde{\mathbf{H}} = \mathbf{M}^{-1/2} \mathbf{H}^* \mathbf{M}^{-1/2}$ yields
    $\tilde{\mathbf{H}} = (\mathbf{H}^*)^{-1/2} \mathbf{H}^* (\mathbf{H}^*)^{-1/2} = \mathbf{I}$.
The transformed potential is thus $\tilde{U}(\tilde{\params}) = \frac{1}{2} \|\tilde{\params}\|^2$, which is isotropic. Consequently, the condition number $\kappa(\tilde{\mathbf{H}}) = \lambda_{\max}/\lambda_{\min} = 1$, minimizing the localized mixing time and removing anisotropy as a source of poor local mixing.
\end{proof}

\subsection{Details on Layer-wise Normalization} \label{app:methods_details}

Inspired by work suggesting that parameters within a neural network layer share statistical properties \citep{sommer_connecting_2024, adilova2024layerwise}, we evaluate a layer-wise preconditioning strategy. This approach, following \citet{wenzel_how_2020}, aggregates the parameter-wise statistics to assign a single preconditioning value to all parameters within a given layer or parameter group (e.g. kernels and biases).

Let $\mathbf{m} = (m_1, \dots, m_d)$ be the vector of parameter-wise preconditioning values derived from one of the methods above. For each of the $L$ layers, we compute a single scalar value by averaging the values of its constituent parameters

\begin{equation}
m^{(\ell)} = \frac{1}{|\mathcal{I}_\ell|} \sum_{i \in \mathcal{I}_\ell} m_i,
\end{equation}
where $\mathcal{I}_\ell$ is the set of indices for parameters in layer $\ell$. To make the overall scale of the preconditioner independent of the step size $\eta$, we normalize these values:
%
    $\bar{m}^{(\ell)} = m^{(\ell)}/({\min_{\ell'=1,\dots,L} m^{(\ell')}})$.
%
The final layer-wise mass matrix $\mathbf{M}_{\text{layer}}$ is then constructed by assigning the normalized scalar $\bar{m}^{(\ell)}$ to all parameters in layer $\ell$. As teased in above, this reduces the granularity of the preconditioner but may offer a more stable and robust alternative, since the normalization naturally upper bounds the maximum parameterwise step size to $\eta$, while allowing smaller updates if necessary.

\section{EXPERIMENTAL SETUP \& FURTHER DETAILS} \label{app:experimental-setup}

\paragraph{Software and Computing Environment}
Our implementation is built in Python using \texttt{jax} \citep{jax2018github} and \texttt{BlackJAX} \citep{cabezas2024blackjax} and builds on the codebase of \citet{kobialka2026interplay}. We ran our experiments on NVIDIA A100 and NVIDIA RTX A6000 GPUs. A reproducible codebase is available at \url{https://github.com/EmanuelSommer/geopost_optimal2026}.

\paragraph{Computational Cost}
Runtime scales with model size and dataset complexity. For the LeNet architecture on F-MNIST, warmstarting across 4 parallel chains requires 1--1.5 hours, while the subsequent sampling phase takes up to 1 hour in the most demanding configuration (50,000 warmup steps, 100,000 samples). Our experimental grid for F-MNIST consists of 270 runs (3 seeds $\times$ 3 warmup lengths $\times$ 3 step sizes $\times$ 5 preconditioning strategies $\times$ 2 granularities), totaling approximately 500 GPU hours for this dataset alone. CIFAR-10 experiments exhibited similar computational demands. In contrast, the smaller FCN models and the nanoGPT architecture required only approximately 1/4 of the time per run. Across all experiments, the total computational cost amounts to approximately 1,750 GPU hours on NVIDIA A100 and RTX A6000 hardware.

\paragraph{Datasets}
Our empirical evaluation is conducted on a diverse suite of benchmark datasets, detailed in \cref{tab:dataoverview}. For tabular regression tasks, we partitioned the data into 70\% for training, 10\% for validation, and 20\% for testing. For the image classification benchmarks, we utilized the official train and test sets. Before training the nanoGPT model, we translated the \textit{tinyshakespeare} dataset available from \cite{Karpathy2022} into more modern English using Gemini 2.5 Pro to facilitate a more accessible assessment of the quality of the generated text. This was done using the prompt ``\,\texttt{Please translate the attached file into simplified modern English. Keep the structure of the text as is (new line for each speaker, speaker name followed by a colon, then the sentence in a new line)}'', together with an attached .txt file of the original text.  We call this dataset \textit{modern-shakespeare} and provide it within our public code repository for reproducibility. This adaptation allows for a more intuitive assessment of the generated text's quality.

\begin{table}[h]
\begin{small}
\begin{center}
\caption{Overview of the used datasets.} \label{tab:dataoverview}
\resizebox{0.6\columnwidth}{!}{%
\begin{tabular}{lrrl}
\textbf{Dataset}  & \textbf{Size} & \textbf{Features} & \textbf{Reference} \\ \hline 
bikesharing & 17379 & 13 & \citet{misc_bike_sharing_dataset_275}  \\
protein & 45730 & 9 & \citet{Dua.2019} \\
F(ashion)-MNIST & 60000 & 28x28 & \citet{xiao2017/online} \\
CIFAR-10 & 60000 & 3x32x32 & \citet{krizhevsky2009learning} \\
(modern-)shakespeare & 39890 & 65 & adapted from \citet{Karpathy2022}\\
\hline
\end{tabular}
}
\end{center}
\end{small}
\end{table}

\paragraph{Models} Since we are performing our experiments on different datasets and modalities, we also use different architectures throughout. In  \cref{tab:models} we provide a list of all different models together with their parameter count and the dataset we applied the model onto. The parameter count covers a relatively wide range to test the applicability of preconditioning on highly different model sizes.

\begin{table}[h]
\begin{small}
\begin{center}
\caption{Overview of the used models.} \label{tab:models}
\begin{tabular}{lrrl}
\textbf{Model}  & \textbf{\#Params} & \textbf{Dataset} \\ 
\hline 
FCN & 10k & bikesharing, protein \\
LeNet (CNN) &  60k & F-MNIST \\
ResNet-7 & 428k & CIFAR-10 \\
nanoGPT & 10.8M & shakespeare \\
\hline
\end{tabular}
\end{center}
\end{small}
\end{table}

For the tabular regression tasks on the bikesharing and protein datasets, we employ a Fully Connected Network (FCN) consisting of six hidden layers.

On the F-MNIST dataset, we use a classic LeNet architecture. It features two convolutional layers with 6 and 16 filters, followed by three fully-connected layers. Average pooling is applied after each convolutional block.

For image classification on CIFAR-10 we use a tailored ResNet-7 comprising 428k parameters. Instead of BatchNorm, we employ Filter Response Normalization (FRN) \citep{singh2020filter}, motivated by known limitations of BatchNorm in sampling-based settings \citep{wenzel_how_2020, shen_variational_2024}. The full specification of the architecture is provided in \cref{tab:arch-ResNet-7}.

\begin{table}[h]
\begin{small}
\begin{center}
\caption{The Custom ResNet-7 Architecture with 428k trainable parameters. The output shape is specified for a sample input tensor of size $3 \times 32 \times 32$. All convolutional layers use a $3 \times 3$ kernel, stride 1, and 'SAME' padding, and are followed by Filter Response Normalization \citep[FRN;][]{singh2020filter}.}
\label{tab:arch-ResNet-7}
\begin{tabular}{llc}
\textbf{Stage} & \textbf{Layer Operation(s)} & \textbf{Filters} \\
\hline
\texttt{input} & Image & - \\
\hline
\texttt{stem} & Conv-FRN & 32 \\
\hline
\texttt{body} & Conv-FRN $\rightarrow$ MaxPool & 64 \\
& Conv-FRN & 64 \\
& Conv-FRN $\rightarrow$ MaxPool & 128 \\
& Conv-FRN $\rightarrow$ MaxPool & 128 \\
\hline
\texttt{res\_block} & \textit{(Identity Shortcut from previous output)} & 128 \\
& \quad $\hookrightarrow$ Conv-FRN & 128 \\
& \quad $\hookrightarrow$ Add & 128 \\
\hline
\texttt{head} & Global Average Pool &  - \\
& Fully Connected &  - \\
\hline
\end{tabular}
\end{center}
\end{small}
\end{table}

The setup for character-level modeling on the shakespeare dataset, is as follows. We use a 6-layer, 6-head GPT-style transformer with context length 256 and embedding dimension 384, adapted from \citet{Karpathy2022}. Dropout is disabled. The learning rate during warmstarting decays linearly from $3\mathrm{e}{-4}$ to $2.5\mathrm{e}{-4}$.

\newpage

\paragraph{Sampling Configuration}
For each distinct experimental configuration, we conducted 3 runs with different random seeds. For SGHMC, we explored the different preconditioning strategies: identity, gradient-based, and 1st and 2nd moment-based both parameter- and layer-wise. We run parallel chains thinned by a factor of 100 using normal isotropic priors. The specific hyperparameter settings for each dataset are detailed in \cref{tab:experiment-grid}. AdaSGHMC was evaluated with the same configurations, but the zero warmup case was excluded, as it would be identical to identity-preconditioned SGHMC. In the main results tables, we report its performance with 5000 warmup steps for all datasets except for the shakespeare dataset where we use 500 warmup steps.

\begin{table}[h!]
\begin{small}
\begin{center}
\centering
\caption{Sampler configurations for each dataset. The best performing step size per dataset, used in the main results tables, are boldfaced.}
\label{tab:experiment-grid}
\begin{tabular}{llllc}
\textbf{Dataset} & \textbf{Batch Size} & \textbf{Warmup Steps} & \textbf{Step Sizes} & \textbf{Chains} \\
\hline
bikesharing  & 128 & {0, 5000, 50000} & {\textbf{0.0001}, 0.001, 0.01} & {8} \\
protein     & 128 &  {0, 5000, 50000} & {\textbf{0.0001}, 0.001, 0.01} & 8 \\
F-MNIST      & 256 & 0, 5000, 50000 & 0.0001, \textbf{0.001}, 0.01 & 10 \\
CIFAR-10    & 518 & 0, 5000, 50000 & \textbf{0.0001}, 0.001, 0.01 & 8 \\
shakespeare & 128 & 0, 500, 5000 & 0.0001, \textbf{0.0002}, 0.001 & 4 \\
\hline
\end{tabular}
\end{center}
\end{small}
\end{table}

\paragraph{Tuning Warmstarting Optimizers}\label{app:tuning-warmstarting-optimizers}
Prior to running the main experimental grid, we conducted a hyperparameter (HP) tuning study to determine which configuration of the warmstarting optimizers we will use throughout our experiments. This preliminary investigation prevents suboptimal warmstarting configurations from influencing the final results. We tuned two key parameters of AdamW with three discrete values each, yielding this grid:

\begin{itemize}
   \item \textbf{Weight Decay} ($\weightdecay$): \{0.001, 0.01, 0.02\}
   \item \textbf{Learning Rate} ($\alpha$): \{0.0001, 0.001, 0.01\}
\end{itemize}

The {HP} configurations were evaluated using both accuracy and {LPPD}, with the optimal configuration selected based on the best performance across both measures. In our experiments, configurations that maximized accuracy also yielded the highest LPPD value, therefore leaving the choice of the best combination straightforward.

The results across the $3\times3$ {HP} grid are shown in \cref{fig:hp_tuning_adamw}. The contour plots display performance surfaces created through linear interpolation between grid points. The marginal plots on each axis show the averaged performance along each parameter dimension. The color gradients indicate performance levels, with warmer colors representing lower performance and cooler colors indicating superior performance. The black dots mark the grid points where we evaluated the performance.

The HP tuning study yielded the following optimal configurations, which are used throughout all experiments in this work: $\beta_1 = 0.9$, $\beta_2 = 0.999$, $\alpha = 0.01$, $\weightdecay = 0.02$.

\begin{figure}[h]
   \centering
   \begin{subfigure}{0.45\textwidth}
       \centering
       \includegraphics[height=5cm]{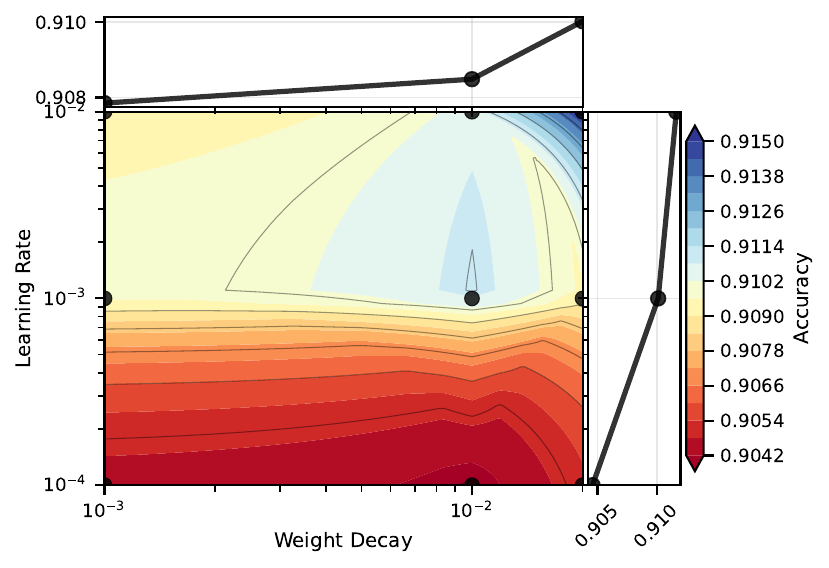}
       \caption{Accuracy}
   \end{subfigure}
   \hfill
   \begin{subfigure}{0.45\textwidth}
       \centering
       \includegraphics[height=5cm]{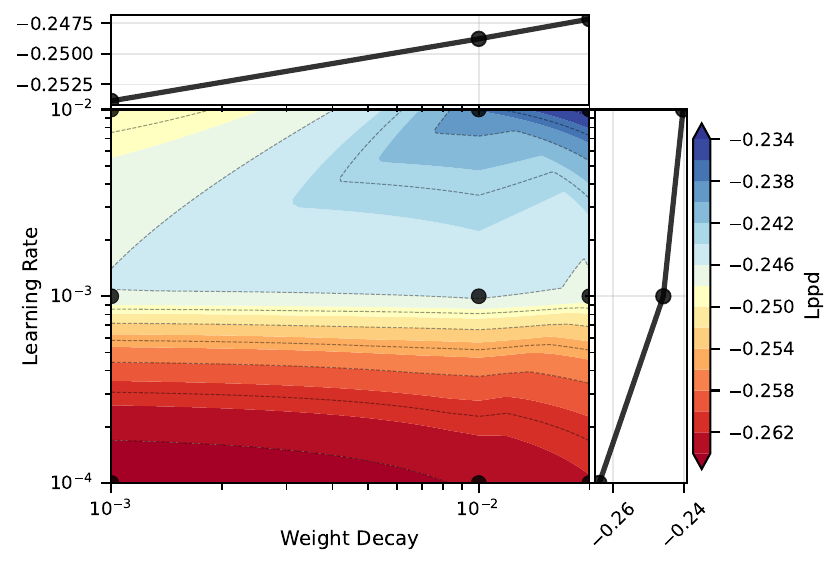}
       \caption{Lppd}
   \end{subfigure}
   \caption[Performance after warmstarting with AdamW]{Impact of the HP choice on the warmstarting performance of AdamW for F-MNIST.}
   \label{fig:hp_tuning_adamw}
\end{figure}

\newpage
\section{ADDITIONAL EMPIRICAL ANALYSIS} \label{app:further-results}

\subsection{Warmup and Runtime Analysis}
\label{app:warmup_analysis}

Among the proposed methods, gradient- and first-moment-based preconditioning emerge as the best choice when performing zero warmup, particularly in terms of log pointwise predictive density (LPPD). As illustrated in Figures \ref{fig:warmup-effects-fmnist} and \ref{fig:warmup-effects-protein}, these local, geometry-informed strategies achieve peak performance immediately and do not benefit from an extended warmup period. The performance only improves minimal despite the sampler given a much higher sampling budget. In stark contrast, the performance of second-moment preconditioning improves substantially with a longer warmup. We hypothesize this is because the second-moment statistic captures more global, richer geometric information about the posterior landscape, which requires a longer sampling phase to be fully exploited.

\begin{figure}[h!]
    \centering
    \includegraphics[width=0.45\linewidth]{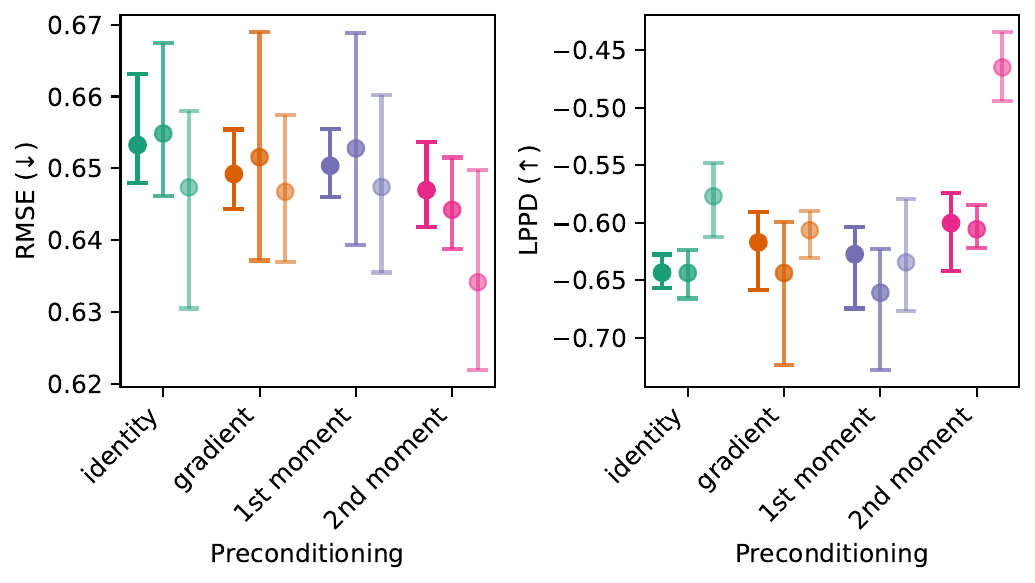}
    \caption{Minimal, maximal, and average performance on the protein dataset for different warmup lengths after warmstarting with AdamW and sampling with SGHMC using the best step size per dataset.}
    \label{fig:warmup-effects-protein}
\end{figure}

This finding highlights a trade-off in computational budget choice, which influences the preferred preconditioning method. Extended sampling runs (see \cref{fig:running-fmnist}) confirm that these performance differences are stable, with each method converging to a distinct performance level based on the chosen warmup. Consequently, gradient and first-moment methods are preferable for budget-constrained scenarios, while second-moment preconditioning may offer superior performance if a longer warmup phase can be afforded.

\subsection{Numerical Robustness}
\label{app:rob}

As shown in \cref{fig:robustness_stepsize_resnet}, vanilla SGHMC performance degrades sharply with misspecified step sizes, while preconditioned variants maintain stable performance across a significantly wider range. This robustness stems from the distinction between the user-defined global $\eta$ and the parameter-specific effective step size $(\eta \cdot m_i^{-1})$. Specifically, the layer-wise normalization (\cref{sec:methods}) acts as a numerical safeguard: even with a large global $\eta$ intended to accelerate convergence, effective updates are rescaled to remain within a regime where discretization bias is controlled.

\begin{figure}[h!]
    \centering
    \includegraphics[width=0.5\linewidth]{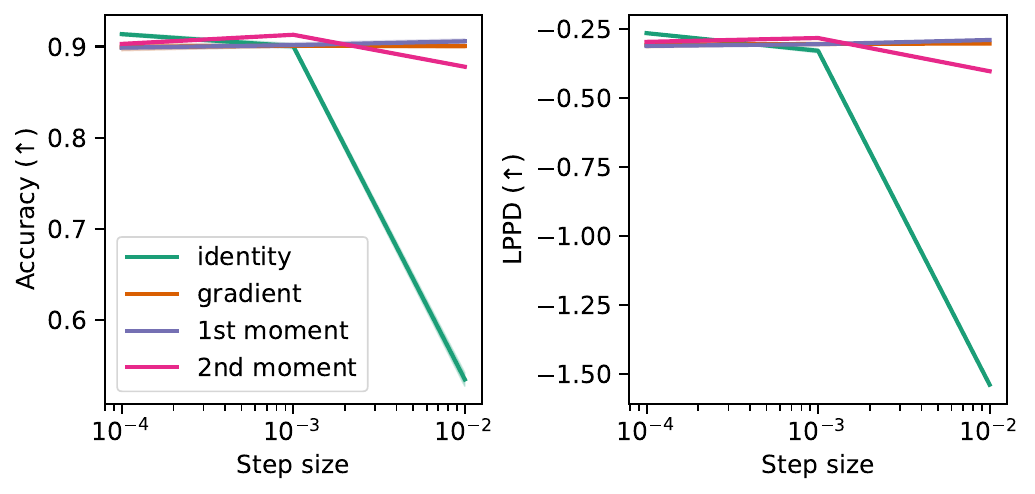}
    \caption{Robustness on CIFAR-10: Preconditioned variants maintain high Accuracy (left) and LPPD (right) across a wider range of step sizes compared to vanilla SGHMC.}
    \label{fig:robustness_stepsize_resnet}
    \vspace{-1em} 
\end{figure}

This mitigates the risk of performance collapse from suboptimal hyperparameters, aiding practical usability. While consistent across datasets, we note that for excessively large global step sizes, effective updates eventually exceed local basin bounds, leading to non-convergence for all methods.

\subsection{Influence on Runtime and Sampling Budgets}\label{app:runtime}

A key advantage of our optimizer-informed preconditioning is the potential for substantial reductions in total sampling budget. As shown in \cref{fig:running-fmnist}, samplers achieve peak performance very early in the sampling run, with accuracy curves quickly flattening. This effect is pronounced even with zero warmup (left plot), making lengthy warmup phases unnecessary and allowing for complete elimination of the warmup period, as demonstrated by our main results in \cref{tab:results_combined}.

\begin{figure*}[h!]
    \centering
    \includegraphics[width=.92\textwidth]{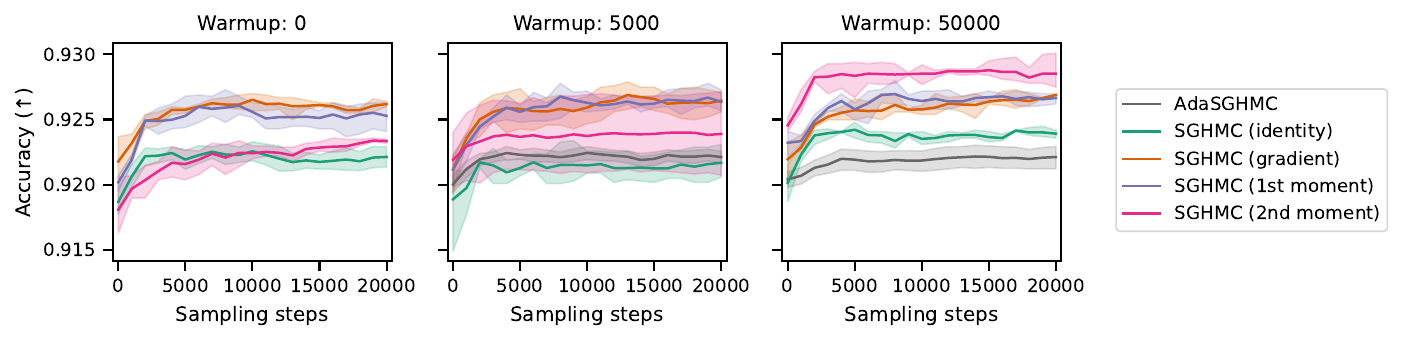}
    \caption{Running mean of the predictive performance with std.\ dev.\ (shaded) during the sampling phase after warmstarting with AdamW and for different warmup lengths on F-MNIST.}
    \label{fig:running-fmnist}
\end{figure*}

This efficiency is most pronounced for local preconditioning strategies (gradient and first-moment), which achieve nearly identical performance trajectories irrespective of warmup duration, making them ideal for rapid, low-budget inference. For a pipeline with 5,000 warmup and 10,000 sampling steps—typical for BDE settings—our approach can eliminate the warmup phase, reducing total computational cost by 33\% while enabling performance gains. Conversely, if larger computational budgets permit long warmup, the advantage shifts towards second-moment preconditioning, which can leverage the more globally attained geometry estimate to ultimately outperform all other configurations.

\subsection{Empirical Preconditioning Distributions}
\label{app:empprecond}
We also investigate the distributions of the preconditioning values in the $\mathbf {M}$ matrix over all chains to understand how they adapt to the network structure. \cref{fig:distr-layerw-precon} reveals consistent layer-wise patterns across all methods for the LeNet architecture: the first and last layers systematically receive higher preconditioning values, which reduces the effective step size for their parameters and is in line with recent findings on layer-wise exploration of high-performance full-batch samplers \citep{sommer_connecting_2024}. These distinct, non-uniform patterns, which are particularly pronounced for second-moment preconditioning, lend strong empirical support to the layer-wise simplification. Furthermore, we observe that gradient- and first-moment preconditioning distributions are nearly identical. This explains the similar performance metrics reported for these two methods when comparing the relative improvements over the different datasets.

\begin{figure}[h!]
    \centering
    \includegraphics[width=0.45\linewidth]{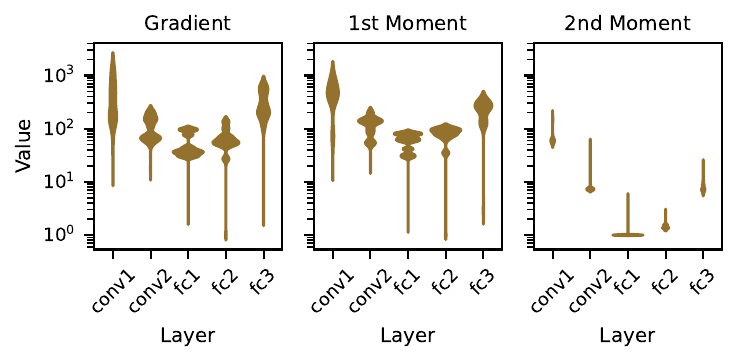}
    \caption{Layer-wise distributions of parameter values in the preconditioning matrix after warmstarting with AdamW on LeNet.}
    \label{fig:distr-layerw-precon}
\end{figure}

\subsection{Adaptive Preconditioning Variants}
\label{app:adaptive_analysis}

Interestingly, we find that the positive influence of our geometry transfer above does not translate to SGHMC variants with scale-adaptations such as AdaSGHMC. To understand why, we analyze the evolution of AdaSGHMC's internal mass matrix $\mathbf M$. AdaSGHMC dynamically updates $\mathbf M$ during warmup to learn the local geometry. Our analysis reveals that this internal adaptation effectively \enquote{overwrites} any initial preconditioning. \cref{fig:distr-layerw-precon-adasghmc} substantiates this claim, displaying the distribution of values in $\mathbf M$ after adaptation. The left subplot shows the result when starting from identity preconditioning, while the right subplot starts from our first-moment-based preconditioning strategy. Visually, the two distributions are nearly indistinguishable, showing that marginalized over the layers, the sampler results in the same geometric configuration regardless of the starting point.

We further quantify this phenomenon using pairwise cosine similarity analysis across ensemble chains (see \cref{fig:similarity-preconditioning}), uncovering two key insights. First, the analysis validates our overwriting hypothesis: after adaptation, mass matrices initialized from identity and from the first moment are highly similar, with nearly identical third and fourth boxes in the left and right subplot each. Second, AdaSGHMC's adaptation actively reduces ensemble diversity. The adapted matrices exhibit extremely high self-similarity (near 1 with almost no variance in the right subplot), meaning mass matrices become almost identical across all chains. In contrast, the static first-moment preconditioner (second box of the left subplot) shows lower self-similarity values. We hypothesize this preserves unique geometric information from each chain's starting mode. This loss of diversity enforced by scale-adaptation likely explains why our simpler, diversity-preserving preconditioned SGHMC often achieves superior generalization performance.

\begin{figure}[h!]
    \centering
    \includegraphics[width=0.55\linewidth]{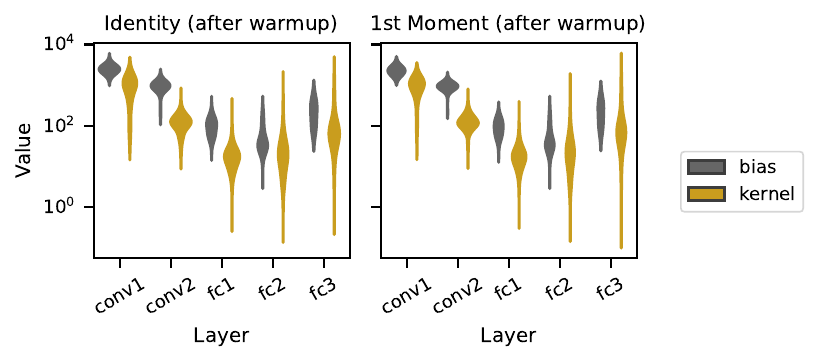}
    \caption{Layer-wise distributions of bias and kernel parameters on the diagonal of the corresponding mass matrices after warmup adaptation throughout the layers. We show the values of $\mathbf M$ after identity- (left) and after first moment-based preconditioning (right).}
    \label{fig:distr-layerw-precon-adasghmc}
\end{figure}

\begin{figure}[h!]
    \centering
    \includegraphics[width=0.5\linewidth]{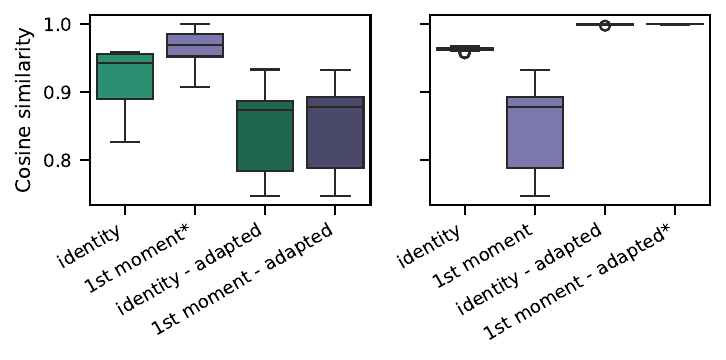}
    \caption{Cosine similarity of $\mathbf M$ matrices after pairwise comparison with the first moment-based preconditioning matrix (left) and the adapted first moment-based preconditioning matrix (right). Self-similarities are marked by a star (*).}
    \label{fig:similarity-preconditioning}
\end{figure}

\subsection{Performance Overview of Layer-wise Preconditioning}\label{app:furtherlayerwiseperf}
Since we provide only average performance scores and the improvement over the DE baseline \cref{tab:results_combined}, we detail our results here, including also the standard error over three runs. Tables \ref{tab:results_predperf} and \ref{tab:results_lppd} present these details for predictive performance and uncertainty quantification, respectively. The results further confirm the consistent improvements achieved by our proposed preconditioning strategies, as discussed above.

\begin{table*}[h!]
    \centering
    \small
    \caption{Predictive performance of layer-wise preconditioning across datasets and inference methods. Reported are RMSE ($\downarrow$) for regression tasks and Accuracy ($\uparrow$) for classification tasks. Values denote mean $\pm$ standard error over three independent runs.}
    \label{tab:results_predperf}
    \setlength{\tabcolsep}{4pt}
    \resizebox{\textwidth}{!}{
    \begin{tabular}{ll
        c c
        c c c}
    \toprule
    \multicolumn{1}{c}{\multirow{2}{*}{\textbf{Sampler}}} &
    \multicolumn{1}{c}{\multirow{2}{*}{\textbf{Preconditioning}}} &
    \multicolumn{2}{c}{\textbf{RMSE} ($\downarrow$)} &
    \multicolumn{3}{c}{\textbf{Accuracy} ($\uparrow$)} \\
    \cmidrule(lr){3-4}\cmidrule(lr){5-7}
      & & \textbf{bike} & \textbf{protein} &
      \textbf{CIFAR-10} & \textbf{F-MNIST} & \textbf{shakespeare} \\
    \midrule
    Deep Ensemble & -- & 0.2445 ± 0.0089 & 0.6589 ± 0.0015 & 0.9019 ± 0.0001 & 0.9157 ± 0.0015 & 0.5504 ± 0.0030 \\
    \midrule
    SGHMC & identity & 0.2392 ± 0.0090 & 0.6533 ± 0.0049 & 0.9150 ± 0.0014 & 0.9226 ± 0.0004 & 0.5536 ± 0.0022 \\
    SGHMC & gradient & 0.2406 ± 0.0089 & 0.6492 ± 0.0033 & 0.9155 ± 0.0007 & 0.9268 ± 0.0002 & 0.5533 ± 0.0029 \\
    SGHMC & 1st moment & 0.2405 ± 0.0087 & 0.6504 ± 0.0028 & 0.9141 ± 0.0004 & 0.9256 ± 0.0011 & 0.5543 ± 0.0022 \\
    SGHMC & 2nd moment & 0.2400 ± 0.0089 & 0.6470 ± 0.0035 & 0.9154 ± 0.0012 & 0.9231 ± 0.0004 & 0.5535 ± 0.0016 \\
    \midrule
    AdaSGHMC & adaptive & 0.2424 ± 0.0087 & 0.6560 ± 0.0023 & 0.9055 ± 0.0025 & 0.9197 ± 0.0005 & 0.5523 ± 0.0027 \\
    \bottomrule
    \end{tabular}
    }
\end{table*}

\begin{table*}[h!]
    \centering
    \small
    \caption{Uncertainty quantification performance of layer-wise preconditioning across datasets and inference methods. Reported are LPPD ($\uparrow$) and perplexity ($\downarrow$). Values denote mean ± standard error over three independent runs.}
    \label{tab:results_lppd}
    \setlength{\tabcolsep}{4pt}
    \resizebox{\textwidth}{!}{
    \begin{tabular}{ll
        c c c c
        c}
    \toprule
    \multicolumn{1}{c}{\multirow{2}{*}{\textbf{Sampler}}} &
    \multicolumn{1}{c}{\multirow{2}{*}{\textbf{Preconditioning}}} &
    \multicolumn{4}{c}{\textbf{LPPD} ($\uparrow$)} &
    \multicolumn{1}{c}{\textbf{Perplexity} ($\downarrow$)} \\
    \cmidrule(lr){3-6}\cmidrule(lr){7-7}
      & & \textbf{bike} & \textbf{protein} &
      \textbf{CIFAR-10} & \textbf{F-MNIST} & \textbf{shakespeare} \\
    \midrule
    Deep Ensemble & -- & 0.5834 ± 0.0206 & -0.7173 ± 0.0143 & -0.3047 ± 0.0025 & -0.2318 ± 0.0002 & 4.3687 ± 0.0310 \\
    \midrule
    SGHMC & identity & 0.6190 ± 0.0163 & -0.6434 ± 0.0086 & -0.2641 ± 0.0025 & -0.2216 ± 0.0002 & 4.2645 ± 0.0250 \\
    SGHMC & gradient & 0.6418 ± 0.0179 & -0.6171 ± 0.0210 & -0.2583 ± 0.0012 & -0.2102 ± 0.0008 & 4.2846 ± 0.0272 \\
    SGHMC & 1st moment & 0.6416 ± 0.0172 & -0.6274 ± 0.0236 & -0.2644 ± 0.0012 & -0.2118 ± 0.0001 & 4.2801 ± 0.0287 \\
    SGHMC & 2nd moment & 0.6397 ± 0.0179 & -0.6002 ± 0.0210 & -0.2615 ± 0.0020 & -0.2214 ± 0.0002 & 4.2894 ± 0.0412 \\
    \midrule
    AdaSGHMC & adaptive & 0.6182 ± 0.0192 & -0.6908 ± 0.0173 & -0.2942 ± 0.0011 & -0.2197 ± 0.0016 & 4.3382 ± 0.0382 \\
    \bottomrule
    \end{tabular}
    }
\end{table*}

\subsection{Performance Overview of Parameter-wise Preconditioning}\label{app:furtherparamwiseperf}
Tables \ref{tab:results_predperf_paramwise} and \ref{tab:results_lppd_paramwise} detail the results when applying parameter- instead of layer-wise preconditioning. The missing entries highlight the numerical instabilities encountered with this more granular strategy on several datasets. This empirically validates our choice to use the layer-wise approach for the main experiments, as it proved to be more robust and did not suffer from such failures in any of our tested configurations.

\begin{table*}[h!]
    \centering
    \small
    \caption{Predictive performance of parameter-wise preconditioning across datasets. Reported are the RMSE ($\downarrow$) for regression tasks and Accuracy ($\uparrow$) for classification tasks. Values denote mean ± standard error over three independent runs.}
    \label{tab:results_predperf_paramwise}
    \setlength{\tabcolsep}{4pt}
    \resizebox{\textwidth}{!}{
    \begin{tabular}{ll
        c c
        c c c}
    \toprule
    \multicolumn{1}{c}{\multirow{2}{*}{\textbf{Sampler}}} &
    \multicolumn{1}{c}{\multirow{2}{*}{\textbf{Preconditioning}}} &
    \multicolumn{2}{c}{\textbf{RMSE} ($\downarrow$)} &
    \multicolumn{3}{c}{\textbf{Accuracy} ($\uparrow$)} \\
    \cmidrule(lr){3-4}\cmidrule(lr){5-7}
      & & \textbf{bike} & \textbf{protein} &
      \textbf{CIFAR-10} & \textbf{F-MNIST} & \textbf{shakespeare} \\
    \midrule
    Deep Ensemble & -- & 0.2445 ± 0.0089 & 0.6589 ± 0.0015 & 0.9019 ± 0.0001 & 0.9157 ± 0.0015 & 0.5504 ± 0.0030 \\
    \midrule
    SGHMC & identity & 0.2392 ± 0.0090 & 0.6533 ± 0.0049 & 0.9150 ± 0.0014 & 0.9226 ± 0.0004 & 0.5536 ± 0.0022 \\
    SGHMC & gradient & -- & -- & 0.9128 ± 0.0020 & -- & 0.5535 ± 0.0020 \\
    SGHMC & 1st moment & -- & -- & 0.9153 ± 0.0006 & -- & 0.5524 ± 0.0017 \\
    SGHMC & 2nd moment & 0.2426 ± 0.0090 & 0.6510 ± 0.0037 & 0.9150 ± 0.0009 & 0.9228 ± 0.0012 & 0.5539 ± 0.0015 \\
    \bottomrule
    \end{tabular}
    }
\end{table*}

\begin{table*}[h!]
    \centering
    \small
    \caption{Uncertainty quantification performance of parameter-wise preconditioning across datasets. Reported are LPPD ($\uparrow$) and perplexity ($\downarrow$). Values denote mean ± standard error over three independent runs.}
    \label{tab:results_lppd_paramwise}
    \setlength{\tabcolsep}{4pt}
    \resizebox{\textwidth}{!}{
    \begin{tabular}{ll
        c c c c
        c}
    \toprule
    \multicolumn{1}{c}{\multirow{2}{*}{\textbf{Sampler}}} &
    \multicolumn{1}{c}{\multirow{2}{*}{\textbf{Preconditioning}}} &
    \multicolumn{4}{c}{\textbf{LPPD} ($\uparrow$)} &
    \multicolumn{1}{c}{\textbf{Perplexity} ($\downarrow$)} \\
    \cmidrule(lr){3-6}\cmidrule(lr){7-7}
      & & \textbf{bike} & \textbf{protein} &
      \textbf{CIFAR-10} & \textbf{F-MNIST} & \textbf{shakespeare} \\
    \midrule
     Deep Ensemble & -- & 0.5834 ± 0.0206 & -0.7173 ± 0.0143 & -0.3047 ± 0.0025 & -0.2318 ± 0.0002 & 4.3687 ± 0.0310 \\
    \midrule
    SGHMC & identity & 0.6190 ± 0.0163 & -0.6434 ± 0.0086 & -0.2641 ± 0.0025 & -0.2216 ± 0.0002 & 4.2645 ± 0.0250 \\
    SGHMC & gradient & -- & -- & -0.2678 ± 0.0057 & -- & 4.2770 ± 0.0294 \\
    SGHMC & 1st moment & -- & -- & -0.2635 ± 0.0017 & -- & 4.2871 ± 0.0238 \\
    SGHMC & 2nd moment & 0.5646 ± 0.0104 & -0.6223 ± 0.0192 & -0.2635 ± 0.0026 & -0.2213 ± 0.0007 & 4.2964 ± 0.0211 \\
    \bottomrule
    \end{tabular}
    }
\end{table*}

\newpage
\section{OPTIMIZATION DETAILS}\label{app:optimizers}

We provide pseudo-code for the AdamW optimizer (\cref{alg:adamw}) using notation aligned with the formalization of the preconditioning schemes. Presenting the optimizer in this form makes explicit that the first- and second-moment statistics required by our optimizer-informed preconditioning are readily available as part of the standard optimization state. In particular, the terminal values of the moment accumulators $\boldsymbol{\mu}_t$ and $\boldsymbol{\nu}_t$ can be extracted at no additional computational cost and directly reused to initialize the sampler’s mass matrix.

\begin{algorithm}[h!]
\caption{AdamW~\citep{loshchilov_decoupled_2019}}
\begin{algorithmic}[1]
\Require Learning rate $\alpha > 0$, Momentum parameters $\beta_1, \beta_2 \in [0, 1)$, Weight decay $\weightdecay \geq 0$, Stability constant $\epsilon > 0$
\State \textbf{Init:} $\params_0 \leftarrow$ initial parameters, $\boldsymbol{\mu}_0 \leftarrow \mathbf{0}$ (first moment), $\boldsymbol{\nu}_0 \leftarrow \mathbf{0}$ (second moment)
\For{$t = 1, 2, \ldots$}
    \State $\mathbf{g}_t \leftarrow \nabla \ell(\params_{t-1})$
    \State $\boldsymbol{\mu}_t \leftarrow \beta_1 \boldsymbol{\mu}_{t-1} + (1-\beta_1)\mathbf{g}_t$
    \State $\boldsymbol{\nu}_t \leftarrow \beta_2 \boldsymbol{\nu}_{t-1} + (1-\beta_2)\mathbf{g}_t^2$
    \State ${\boldsymbol{\mu}}^\prime_t \leftarrow \boldsymbol{\mu}_t/(1-\beta_1^t)$
    \State ${\boldsymbol{\nu}}^\prime_t \leftarrow \boldsymbol{\nu}_t/(1-\beta_2^t)$
    \State $\params_t \leftarrow \params_{t-1} - \alpha({\boldsymbol{\mu}}^\prime_t+\lambda\params_{t-1})/(\sqrt{{\boldsymbol{\nu}}^\prime_t} + \epsilon)$
\EndFor
\State \Return $\params_t$
\end{algorithmic}
\label{alg:adamw}
\end{algorithm}

\end{document}